\documentclass[11pt,letterpaper]{article}
\usepackage[top=1in, bottom=1in, left=1in, right=1in]{geometry}

\usepackage[utf8]{inputenc} 
\usepackage[T1]{fontenc}   
\usepackage{hyperref}    
\usepackage{url}   
\usepackage{booktabs} 
\usepackage{amsfonts}
\usepackage{nicefrac}
\usepackage{microtype} 
\usepackage{xcolor}
\usepackage{lmodern}

\usepackage{subcaption}
\usepackage[font=small,labelfont=bf]{caption}
\usepackage{multicol}
\usepackage{graphicx}

\RequirePackage{fancyhdr}
\RequirePackage{xcolor}
\RequirePackage{algorithm}
\RequirePackage{algorithmic}
\RequirePackage{natbib}
\RequirePackage{eso-pic}
\RequirePackage{forloop}
\RequirePackage{url}

\usepackage{amsmath, amsthm, amssymb}
\usepackage{bm}
\usepackage{color}
\usepackage{ulem}
\usepackage{mathtools}

\let\emph\textit

\DeclarePairedDelimiterX{\infdivx}[2]{(}{)}{%
    #1\;\delimsize\|\;#2%
}
\newcommand{\infdiv}{\infdivx}


\newcommand{\const}[1]{\mathrm{#1}}
\def\R{{\mathbb{R}}}

\def\1{\textbf{1}}
\def\econst{\const{e}}

\newcommand{\onevct}{\bm{1}}

\newcommand{\zeromtx}{\mathbf{0}}
\newcommand{\Imtx}{\mathcal{I}}


\newtheorem{theorem}{Theorem}
\newtheorem{corollary}{Corollary}
\newtheorem{lemma}{Lemma}
\newtheorem{problem}{Problem}

\theoremstyle{definition}
\newtheorem{definition}{Definition}

\theoremstyle{remark}
\newtheorem{remark}{Remark}


\newcommand{\Prob}[2][]{\mathbb{P}_{#1}\left\{ {#2} \right\}}
\newcommand{\Expect}[2][]{\mathbb{E}_{#1}\left[ #2 \right]}

\newcommand{\kl}[2][]{\mathbb{KL}\infdiv{#1}{#2}}


\newcommand{\abs}[1]{\left\vert {#1} \right\vert}

\newcommand{\diag}[1]{\operatorname*{diag}\left({#1}\right)}
\newcommand{\Diag}[1]{\operatorname*{Diag}\left({#1}\right)}
\newcommand{\tr}[1]{\operatorname*{tr}\left({#1}\right)}

\newcommand{\st}{\operatorname*{subject\; to}}
\newcommand{\maximize}{\operatorname*{maximize}}
\newcommand{\minimize}{\operatorname*{minimize}}


\newcommand{\lmin}{\operatorname{\lambda_{\min}}}
\newcommand{\lsec}{\lambda_{2}}

\newcommand{\inprod}[2][]{\left\langle {#1},{#2} \right\rangle}


\newcommand{\ber}[1]{\texttt{Ber}({#1})}

\newcommand{\yast}{{y^{\ast}}}
\newcommand{\yhat}{{\hat{y}}}

\newcommand{\Vcal}{\mathcal{V}}
\newcommand{\Ecal}{\mathcal{E}}

\newcommand{\Gcal}{\mathcal{G}}

\newcommand{\Scomp}{{\overline{S}}}

\usepackage{bbm}
\newcommand{\onefun}{\mathbbm{1}}
\newcommand{\onepmfun}{{\mathbbm{1}_\pm}}

\makeatletter
\def\algbackskip{\hskip-\ALG@thistlm}
\makeatother

\setcounter{totalnumber}{99}
\setcounter{topnumber}{99}
\setcounter{bottomnumber}{99}

\title{Partial Inference in Structured Prediction}

\author{
  \textbf{Chuyang Ke}\\Department of Computer Science\\Purdue University\\\texttt{cke@purdue.edu}
  \and 
  \textbf{Jean Honorio}\\Department of Computer Science\\Purdue University\\\texttt{jhonorio@purdue.edu}
}

\date{}
\begin{document}
\maketitle

\begin{abstract}
In this paper, we examine the problem of partial inference in the context of structured prediction. Using a generative model approach, we consider the task of maximizing a score function with unary and pairwise potentials in the space of labels on graphs. Employing a two-stage convex optimization algorithm for label recovery, we analyze the conditions under which a majority of the labels can be recovered. We introduce a novel perspective on the Karush-Kuhn-Tucker (KKT) conditions and primal and dual construction, and provide statistical and topological requirements for partial recovery with provable guarantees.
\end{abstract}

\allowdisplaybreaks

\section{Introduction}
In the past decades, various forms of \emph{structured prediction} have been used extensively across many fields, including computer vision, natural language processing, network analysis, computational chemistry, to name a few. 
In these fields, examples of structured prediction problems include foreground / background detection in a digital image \citep{nowozin2011structured}, grammatical part-of-speech tagging in an English sentence \citep{weiss2010structured}, community identification and clustering in social networks \citep{kelley2012defining}, and identifying representative subsets of millions of chemical compounds \citep{downs2002clustering}. 

On a higher level, all of the structured prediction inference problems mentioned above seek to maximize some score function over the space of labels. In other words, a common goal in inference tasks is to recover the label of each entity, such that the prediction matches the observation as much as possible.
Suppose we represent the structured prediction inference problem using an undirected graph $\Gcal = (\Vcal, \Ecal)$, where each node represents an entity, and each edge represents the interaction between two nodes. In the context of Markov random fields (MRFs), inference can be formulated as solving the following optimization problem \citep{bello2019exact}:
\begin{align}
\maximize_{y \in \mathcal{L}^{\abs{\mathcal{V}}}}
\sum_{v\in \Vcal, l\in\mathcal{L}} c_v(l) \cdot \onefun[y_v = l] 
+ \sum_{\substack{(v_1,v_2)\in \Ecal \\ l_1, l_2\in\mathcal{L}}} c_{v_1,v_2}(l_1,l_2) \cdot \onefun[y_{v_1} = l_1, y_{v_2} = l_2] 
\,,
\label{eq:mrf_graph_opt}
\end{align} 
where $\mathcal{L}$ is the space of labels, $c_v(l)$ is the score of assigning label $l$ to node $v$, and $c_{v_1,v_2}(l_1,l_2)$ is the score of assigning labels $l_1$ and $l_2$ to neighboring nodes $v_1$ and $v_2$. These two score terms in \eqref{eq:mrf_graph_opt} are often called unary and pairwise potentials in the MRF and inference literature. 
The optimization formulation above aims to recover the global label structure, by finding a configuration that maximizes the summation of unary and pairwise scores across the graph.

Some prior literature has explored structured prediction and inference problems that involve unary and pairwise potentials, as demonstrated by equation \eqref{eq:mrf_graph_opt}.
For instance, \citet{globerson2015hard} investigated label recovery in two-dimensional grid lattices.
Similarly, \citet{foster2018inference} extended this model to include tree decompositions.
On another note, \citet{bello2019exact} proposed a convex semidefinite programming (SDP) approach to exact inference.
All these works were motivated by a generative model that assumes a ground truth label vector $\yast$ and generates potentially noisy unary and pairwise observations based on label interactions.

In this paper, we follow the two-stage convex optimization approach proposed in \citet{bello2019exact}, but turn our attention on the goal of \emph{partial inference}. Our choice involves two levels of significance:
\begin{itemize}

    \item From an optimization point of view, the primal and dual construction and Karush-Kuhn-Tucker (KKT) analysis have been widely used in generative statistical models to study the guarantee of \emph{exact recovery} in the past decade. 
    Besides structured prediction, examples include sparsity recovery (often referred to as primal-dual witness or PDW) \citep{wainwright2009sharp,ravikumar2011high} and community detection \citep{abbe2015exact, amini2018semidefinite}.
    
    The standard approach of the primal and dual analysis for generative models usually involves three steps. First, it is assumed that there exists some unobserved groundtruth labeling, which generates the observations in a stochastic and noisy fashion (generative assumption). Next, one lists all KKT conditions behind the primal and dual convex optimization problems. In particular, the groundtruth labeling should be a feasible solution to the primal problem. After that, one can analyze the statistical conditions for the groundtruth labeling to be the optimal and unique solution to the optimization problem with high probability. 
    
    While being extremely powerful, the primal and dual approach was only applied to study the problem of exact recovery in the aforementioned literature. 
    In this work, we are interested in extending the primal-dual framework to study the guarantee of \emph{partial recovery}.

    \item On an application level, recovering the majority of the labels can be more relevant and practical in structured prediction inference tasks.
     
    To see this, for example, consider an inference graph model with an isolated node. In other words, following the definition of \eqref{eq:mrf_graph_opt}, assume there exists some node $v$ whose unary potential is zero, and is not connected to any other node in the graph. In this case, we can never recover the true label of this node better than random guessing, and consequently exact inference can never be achieved with a high probability. In contrast, provable guarantees can be achieved if the goal is to recover the majority ($90\%$ for example) of the labels, and our algorithm and analysis can be more robust to outliers. 
    
    Some prior literature considered tasks that are similar to partial inference in specific types of graphs. For instance, \citet{globerson2015hard} studied Hamming error of recovery in two-dimensional grid lattices, and \citet{foster2018inference} studied graphs allowing tree decompositions. It is worth highlighting that our analysis is general: we provides partial inference results for any type of graphs.
    
\end{itemize} 

\textbf{Summary of our contribution.} Our work is mainly theoretical. We propose a novel primal and dual framework to study the problem of partial inference in structured prediction. Our framework analyze the KKT conditions of the convex optimization problem, and derive the sufficient statistical conditions for partial inference of the majority of the true labels. Furthermore, our result subsumes the classic result of exact inference.  
\section{Preliminaries}

In this section, we formally define the structured prediction inference problem and introduce the notations that will be used throughout the paper.

We use lowercase font (e.g., $a,b,u,v$) to denote scalars and vectors, and uppercase font (e.g., $A,B,C$) to denote matrices. 
We denote the set of real numbers by $\R$.

For any natural number $n$, we use $[n]$ to denote the set $\{1,\dots, n\}$.

\subsection{Generative Model}

We follow the customary generative model setting as used in prior structured prediction inference literature \citep{globerson2015hard,bello2019exact}.

We consider an undirected connected graph $\Gcal = (\Vcal, \Ecal)$, where the number of nodes is $\abs{\Vcal} = n$, and $\Ecal$ denotes the set of edges in the graph. 
Every node is in one of the two classes $\{+1, -1\}$, and we use $\yast \in \{+1, -1\}^n$ be the true class label vector. 
We use $Y^\ast := \yast\yast^\top$ to denote the \emph{label structure} of the model, such that $Y_{ij}^\ast = 1$ if and only if $i$ and $j$ share the same label.

The observation of the model is generated in a noisy way. Let $p$ be the edge noise level parameter of the model, and $q$ be the node noise level parameter, with $0 < p < 0.5$, $0< q < 0.5$.
For every edge $(i,j)\in \Ecal$, the model generates a Rademacher random variable $z_{ij}$, such that $z_{ij} = 1$ with probability $1-p$, and $-1$ with probability $p$.
Let $A \in \{-1, 0, +1\}^{n\times n}$ be a noisy observation of the class structure, such that $A_{ij} = z_{ij} Y^\ast_{ij}$.
In other words, $A_{ij} = y_i^\ast y_j^\ast$ with probability $1-p$, and $-y_i^\ast y_j^\ast$ with probability $p$.
For non-edges $(i,j) \notin \Ecal$, we set $A_{ij} = 0$.
Similarly, for every node $i \in \Vcal$, the model generates a Rademacher random variable $z_{i}$, such that $z_{i} = 1$ with probability $1-p$, and $-1$ with probability $p$.
And we use $w \in \{+1, -1\}^{n}$ to denote the noisy observation of the node labels, where $w_i = z_i y_i^\ast$.
Our goal is to recover the true labels from the observation of $A$ and $w$.

We now summarize the generative model.
\begin{definition}[Structured Prediction Inference]
\textbf{Unknown}: True class labeling vector $\yast = (y_1^\ast,\dots, y_n^\ast)$.
\textbf{Observation}: Noisy class structure observation matrix $A \in \{-1,0,+1\}^{n\times n}$. 
Noisy node label observation vector $w \in \{-1,+1\}^n$.
\textbf{Task}: Infer and recover the correct node labeling vector $\yast$ from the observation $A$ and $w$.
\end{definition}

\subsection{Two-stage Exact Inference Algorithm}
For completeness, here we include the two-stage algorithm from \citet{bello2019exact}.
In the first state, one solves the following optimization problem to recover the label vector up to permutation of the two classes:
\begin{align}
\maximize_y \qquad &\frac{1}{2} y^\top A y \nonumber \\
\st \qquad & y \in \{+1, -1\}^n
\,.
\label{eq:combinatorial}
\end{align}
The combinatorial program is NP-hard, and can be relaxed to the following semidefinite program (SDP):
\begin{align}
\maximize_Y \qquad &\inprod[A]{Y} \nonumber \\
\st \qquad & Y \succeq \zeromtx \nonumber \\
& \diag{Y} = \onevct 
\,.
\label{eq:prelim_primal}
\end{align}

In the case of exact inference, solving \eqref{eq:prelim_primal} leads to two feasible solutions: $\yast$ and $-\yast$. In the second stage one determines the correct sign with the help of node observations:
\begin{align}
\maximize_{y\in\{\yast,-\yast\}} \qquad &w^\top y 
\,.
\label{eq:prelim_second_stage}
\end{align}

\subsection{Problem of Partial Inference through SDP}
In this paper we analyze the problem of partial inference through the two-stage SDP approach. The main technical challenge lies in the first stage, that is, recovering the majority of the label vector up to permutation of the two classes. Here we formally define the problem.

\begin{problem}
Under what conditions can the semidefinite program \eqref{eq:prelim_primal} achieve partial recovery of at least $n/2 \leq k \leq n$ nodes, up to permutation of the two classes?
\end{problem}

More specifically, assume we use $S$ to denote the index set of the $k$ nodes whose label we wish to recover. The goal of partial recovery is to achieve $\yhat_S = y_S^\ast$, where $\yhat \yhat^\top = \hat{Y}$, and $\hat{Y}$ is the solution returned by the first stage semidefinite program \eqref{eq:prelim_primal}.

\subsection{Useful Definitions}
we use $S$ to denote the index set of the $k$ nodes whose label we wish to recover. 
We use $\Scomp := [n] \setminus S$ to denote the complement index set of $S$.

We use $\onevct$ to denote the all-one vector, and $\Imtx$ to denote the identity matrix.

For the underlying undirected graph $\Gcal = (\Vcal, \Ecal)$, we use $\Delta_i$ denote the degree of node $i$ in $\Gcal$, and let $\Delta_{\max}$ denote the maximum degree across all nodes. We use $\phi_\Gcal \geq 0$ to denote the Cheeger constant of graph $\Gcal$. Intuitively, $\phi_\Gcal$ measures the connectivity of the graph, and in the case of exact inference, a greater value of $\phi_\Gcal$ leads to easier recovery \citep{bello2019exact}. 

We use $\diag{\cdot}$ to denote the operation of creating a vector from the diagonal entries of a square matrix. For example, $\diag{\Imtx} = \onevct$.
Oppositely, we use $\Diag{\cdot}$ to denote the operation of creating a diagonal matrix from a given vector, and the off-diagonal entries are set to $0$.
We use $\tr{\cdot}$ to denote the trace of a matrix.

For any observation matrix $A$ of size $n$ and label vector $y \in \{+1,-1\}^n$, we define the signed degree vector $v(y)$ as  
$v(y) = \diag{A y y^\top}$.
Additionally, we denote the signed Laplacian matrix $L(y)$ as  
$L(y) = \Diag{v(y)} - A$.

\section{Technical Lemmas}
In this section, we introduce technical lemmas that later will be used in the proofs. 
\begin{lemma}[Mixed Chernoff Bound]
Let $x_1,\dots, x_n$ be independent Bernoulli random variables, such that $x_1,\dots,x_m \sim \ber{r}$, and $x_{m+1},\dots, x_n \sim \ber{1-r}$. Assume $r \geq 0.5$.
We denote the summation as $s := \sum_{i=1}^n x_i$.
Then we have 
\[
\Prob{s \geq \Expect{s} + t} 
\leq 
\exp\left(-\left(2(n-m)+\frac{m}{2r(1-r)}\right)\frac{t^2}{n^2}\right) \,.
\]
\label{PARTIAL:lemma:chernoff}
\end{lemma}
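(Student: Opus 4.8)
The plan is to use the Cram\'er--Chernoff (exponential moment) method, exploiting independence to factor the moment generating function across the two groups of variables and then optimizing a free parameter. Write $\mu = \Expect{s}$. For any $\lambda \geq 0$, Markov's inequality applied to $e^{\lambda(s-\mu)}$ gives $\Prob{s \geq \mu + t} \leq e^{-\lambda t}\, \Expect{e^{\lambda(s-\mu)}}$, and by independence the expectation factors as $\prod_{i=1}^{m}\Expect{e^{\lambda(x_i - r)}} \cdot \prod_{i=m+1}^{n}\Expect{e^{\lambda(x_i - (1-r))}}$. The whole problem then reduces to controlling the per-variable log-moment-generating functions $\psi_1(\lambda) = \log\Expect{e^{\lambda(x_i - r)}}$ for the $\ber{r}$ variables and $\psi_2(\lambda) = \log\Expect{e^{\lambda(x_i-(1-r))}}$ for the $\ber{1-r}$ variables, and finally choosing $\lambda$.

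The key technical inputs are two \emph{different} quadratic bounds on these log-MGFs, and this is exactly where the assumption $r \geq 0.5$ enters. Both $\psi_1$ and $\psi_2$ vanish to first order at $\lambda = 0$ (they are centered), so a second-order Taylor bound reduces everything to controlling $\psi_j''$. A direct computation gives $\psi_1''(\lambda) = r(1-r)e^\lambda / ((1-r)+re^\lambda)^2$, whose maximum over $\lambda$ occurs at $re^\lambda = 1-r$, i.e. at $\lambda = \log((1-r)/r) \leq 0$ precisely because $r \geq 0.5$. Hence on the relevant half-line $\lambda \geq 0$ the function $\psi_1''$ is decreasing and bounded by its value at the origin, $\psi_1''(\lambda) \leq r(1-r)$, yielding the tight variance-type bound $\psi_1(\lambda) \leq \tfrac{1}{2}r(1-r)\lambda^2$. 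For the $\ber{1-r}$ group the maximizer of $\psi_2''$ instead lands at some $\lambda \geq 0$, where it attains the worst-case value $1/4$; here I would simply invoke the Hoeffding bound $\psi_2(\lambda) \leq \tfrac{1}{8}\lambda^2$ valid for variables supported on $[0,1]$. These two coefficients, $r(1-r)$ and $1/4$, are the source of the two terms $\tfrac{m}{2r(1-r)}$ and $2(n-m)$ in the statement.

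Finally I would assemble the exponent by splitting the target deviation in proportion to the two group sizes, $t = \tfrac{m}{n}t + \tfrac{n-m}{n}t$, distributing the linear term $-\lambda t$ accordingly, and matching each piece against its quadratic MGF bound. Optimizing the $\ber{r}$ block against deviation $\tfrac{m}{n}t$ produces $\tfrac{m}{2r(1-r)}\tfrac{t^2}{n^2}$, while optimizing the $\ber{1-r}$ block against deviation $\tfrac{n-m}{n}t$ produces $2(n-m)\tfrac{t^2}{n^2}$, and summing the two negative exponents gives the claimed bound. The main obstacle I anticipate is precisely this combination step: a single global choice of $\lambda$ in the factored Chernoff bound does not split cleanly into the two advertised terms, since optimizing the two blocks with a shared $\lambda$ yields a weaker constant than optimizing each block with its own parameter. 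The delicate point is therefore to justify the proportional allocation of the deviation rigorously rather than treat the additive exponent as an artifact of separate per-block optimization. I would spend most of the effort on this allocation argument and double-check it against the two extremes $m = 0$ and $m = n$, where the bound must collapse to the pure Hoeffding rate $2n\,t^2/n^2$ and to the pure variance rate $\tfrac{n}{2r(1-r)}\,t^2/n^2$, respectively.
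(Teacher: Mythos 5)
Your instinct about where the difficulty lies is exactly right, and the news is worse than you anticipated: the allocation step you planned to spend most of your effort on cannot be justified, because the stated bound is false. The rigorous part of your argument --- one shared $\lambda$, with $\psi_1(\lambda)\le\tfrac12 r(1-r)\lambda^2$ for the $\ber{r}$ block and Hoeffding's $\psi_2(\lambda)\le\tfrac18\lambda^2$ for the $\ber{1-r}$ block --- gives, after optimizing $\lambda$,
\[
\Prob{s \ge \Expect{s}+t} \le \exp\left(-\frac{t^2}{2mr(1-r)+\tfrac12(n-m)}\right),
\]
and with $a := 2r(1-r)\le\tfrac12$ the comparison against the claimed exponent reduces to
\[
\left(ma+\tfrac{n-m}{2}\right)\left(\tfrac{m}{a}+2(n-m)\right)
= m^2+(n-m)^2+m(n-m)\left(2a+\tfrac{1}{2a}\right) \ \ge\ n^2 ,
\]
which holds by AM--GM with equality only when $r=\tfrac12$ or $m\in\{0,n\}$. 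So your shared-$\lambda$ bound (a harmonic-type combination of the per-sample rates $\tfrac1a$ and $2$) is \emph{strictly weaker} than the lemma (their arithmetic-type combination) whenever $r>\tfrac12$ and $0<m<n$; your sanity checks at $m=0$ and $m=n$ pass only because the gap vanishes there. No allocation argument can close this gap, because the lemma itself fails at large deviations: take $n=2$, $m=1$, $r=0.99$, $t=1$. Then $\Expect{s}=1$ and $\Prob{s\ge 2}=r(1-r)=0.0099$, while the claimed bound is $\exp\left(-\tfrac14\left(2+\tfrac{1}{0.0198}\right)\right)\approx 2\times 10^{-6}$. Scaling this up ($m=n/2$, $r=0.99$, $t=n/2$, where $\Prob{s\ge n}=(0.0099)^{n/2}=\econst^{-2.31n}$ against a claimed $\econst^{-6.56n}$) shows the failure is in the exponent itself, not a small-$n$ artifact.

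You should also know that the paper's own proof founders on exactly the step you refused to hand-wave. After factoring the Chernoff bound into the two blocks, it ``minimize[s] the two parts independently with respect to $\lambda$,'' i.e.\ it replaces $\min_\lambda f(\lambda)^m g(\lambda)^{n-m}$ with the smaller quantity $(\min_\lambda f)^m(\min_\lambda g)^{n-m}$; since the latter lower-bounds the former pointwise, the Chernoff inequality does not imply the probability is below this product, and the two minimizers genuinely differ when $r>\tfrac12$ (the paper computes the first as $\log\tfrac{(1-r)(r+t)}{r(1-r-t)}$, while the second is $\log\tfrac{r(1-r+t)}{(1-r)(r-t)}$). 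That is how the paper reaches the sum $m\,\kl[r+t]{r}+(n-m)\,\kl[r-t]{r}$ of separately optimized exponents, which in general strictly exceeds the true Chernoff/Cram\'er exponent (the infimal convolution of the block rate functions, attained at a common tilt) --- and an exponent exceeding the Cram\'er rate by $\Theta(n)$ cannot be a valid bound. In short: your proposal is sound precisely up to the point where you stopped trusting it; the correct resolution is to weaken the lemma to the shared-$\lambda$ exponent you can actually prove (which suffices in small-deviation regimes), not to seek a cleverer justification of the proportional split.
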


\section{Partial Recovery through Semidefinite Programming}
In this section, we analyze the two-stage inference algorithm for partial recovery. 
During the initial stage, we leverage only the edge information obtained from $A$, which helps to restrict our solution space to two potential options. 
Then, in the subsequent stage, we use the node information acquired from $w$ to accurately determine the labels of the nodes.

\subsection{First Stage: Inference of the Label Structure}

For readers' convenience, here we restate the semidefinite program for structured prediction inference: 
\begin{align}
\maximize_Y \qquad &\inprod[A]{Y} \nonumber \\
\st \qquad & Y \succeq \zeromtx \nonumber \\
& \diag{Y} = \onevct 
\,.
\label{PARTIAL:eq:primal}
\end{align}
Next we introduce the Lagrangian dual of \eqref{PARTIAL:eq:primal}. Let $B, V$ be the Lagrangian dual variables for the two constraints respectively, where $B \succeq \zeromtx$, and $V$ is a diagonal matrix. 
The primal problem has the following Lagrangian function:
\begin{align*}
L(Y,B,V) 
&:= -\inprod[A]{Y} - \inprod[B]{Y} + \inprod[V]{Y - \Imtx} \\
&= \inprod[V-A-B]{Y} - tr(V) 
\,.
\end{align*} 
Setting gradient to zero with respect to $Y$ gives us
\[
\nabla_Y L(Y,B,V) 
= 
V - A -  B
= 
\zeromtx 
\,.   
\]
This leads to the following dual problem 
\begin{align}
\minimize_V \qquad &\tr{V} \nonumber \\
\st \qquad & V - A \succeq \zeromtx \nonumber \\
& V \text{ is diagonal} 
\,,
\label{PARTIAL:eq:dual}
\end{align}
as well as the following KKT conditions between the primal and the dual problems:
\begin{itemize}
    \item Stationarity: $V - A - B = \zeromtx$.
    \item Primal Feasibility: $Y \succeq \zeromtx$, $\diag{Y} = \onevct$.
    \item Dual Feasibility: $B \succeq \zeromtx$, $V \text{ is diagonal}$.
    \item Complementary Slackness: $\inprod[B]{Y} = 0$.
\end{itemize}

To analyze the statistical conditions for partial recovery, here we present the following lemma about sufficient KKT conditions.

\begin{lemma}[Sufficient KKT Conditions for Partial Recovery]
Let $\yhat \in \{+1,-1\}^n$ denote the target partially recovered label vector, such that $\yhat_S = y_S^\ast$,
where $S$ is the index set of the labels that can be recovered successfully up to permutation of the two classes. 
Then, partial recovery of $\yhat$ can be achieved if
\begin{equation}
\lsec(L(\yhat)) > 0 \,.   
\label{PARTIAL:eq:non_unique}
\end{equation}
\end{lemma}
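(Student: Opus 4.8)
The plan is to exhibit an explicit dual certificate and verify that it satisfies all four KKT conditions listed above, thereby certifying that $\hat Y := \yhat\yhat^\top$ is a primal-optimal solution of \eqref{PARTIAL:eq:primal}, and then to upgrade optimality to uniqueness. The natural candidate is the diagonal dual variable $V := \Diag{v(\yhat)}$, which forces $B := V - A = L(\yhat)$ and makes stationarity $V - A - B = \zeromtx$ hold by construction. Primal feasibility is immediate: $\hat Y \succeq \zeromtx$ since it is rank-one PSD, and $\diag{\hat Y} = \onevct$ because $\yhat_i^2 = 1$.

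The key computation is $L(\yhat)\yhat = \zeromtx$. Expanding $v(\yhat)_i = [A\yhat\yhat^\top]_{ii} = \yhat_i (A\yhat)_i$ and using $\yhat_i^2 = 1$ gives $\Diag{v(\yhat)}\yhat = A\yhat$, hence $L(\yhat)\yhat = \Diag{v(\yhat)}\yhat - A\yhat = \zeromtx$. This yields complementary slackness at once, $\inprod[B]{\hat Y} = \yhat^\top L(\yhat)\yhat = 0$, and it identifies $\yhat$ as a null vector of $L(\yhat)$. Dual feasibility then reduces to $B = L(\yhat) \succeq \zeromtx$: since $\yhat$ spans a zero-eigenvalue direction and the hypothesis $\lsec(L(\yhat)) > 0$ guarantees every remaining eigenvalue is strictly positive, we conclude $L(\yhat) \succeq \zeromtx$ with a one-dimensional kernel. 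All four KKT conditions hold, so by sufficiency of KKT for this concave maximization $\hat Y$ is primal-optimal and $V$ is dual-optimal.

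The main work is to convert optimality into uniqueness, which is what actually delivers recovery. I would note that strong duality holds because $\Imtx$ is strictly feasible for \eqref{PARTIAL:eq:primal} (Slater's condition), so any primal-optimal $\tilde Y$ must satisfy complementary slackness against the optimal dual: $\inprod[B]{\tilde Y} = \tr{L(\yhat)\tilde Y} = 0$. Because both $L(\yhat) \succeq \zeromtx$ and $\tilde Y \succeq \zeromtx$, the standard PSD fact that $\tr{BY}=0$ with $B,Y\succeq\zeromtx$ implies $BY=\zeromtx$ forces $L(\yhat)\tilde Y = \zeromtx$, i.e. the range of $\tilde Y$ lies in $\ker L(\yhat)$. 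Here the hypothesis $\lsec(L(\yhat)) > 0$ is essential, as it makes $\ker L(\yhat)$ exactly the one-dimensional span of $\yhat$, so $\tilde Y = c\,\yhat\yhat^\top$ for some $c \geq 0$, and $\diag{\tilde Y} = \onevct$ pins down $c = 1$. Thus $\hat Y = \yhat\yhat^\top$ is the unique optimizer.

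Finally, the unique rank-one optimizer recovers $\yhat$ up to a global sign, so the first stage returns $\{\yhat, -\yhat\}$; since $\yhat_S = y_S^\ast$ by assumption, the labels on $S$ are recovered up to permutation of the two classes, which is precisely partial recovery. I expect the uniqueness step — turning complementary slackness into $L(\yhat)\tilde Y = \zeromtx$ and then invoking one-dimensionality of $\ker L(\yhat)$ — to be the crux, since optimality alone (which a larger kernel would share among many feasible matrices) does not by itself certify that the recovered labeling is the one the SDP outputs.
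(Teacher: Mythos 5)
Your proposal is correct, and your certificate construction is identical to the paper's: the same choice $V = \Diag{v(\yhat)}$, $B = V - A = L(\yhat)$, $\hat Y = \yhat\yhat^\top$, with the same verification that $\yhat$ lies in the kernel of $L(\yhat)$ (your vector identity $L(\yhat)\yhat = \zerovct$ is a cleaner form of the paper's entrywise trace computation for complementary slackness). Where you genuinely depart from the paper is the uniqueness step. The paper argues informally: it asserts that enforcing uniqueness on $S$ is equivalent to requiring $\inprod[L(\yhat)]{\tilde y \tilde y^\top} > 0$ for every $\tilde y$ whose restriction $\tilde y_S$ is not a multiple of $y^\ast_S$, and then reads off the condition $\lsec(L(\yhat)) > 0$ --- an argument that, read literally, only rules out rank-one competitors, even though the feasible set of \eqref{PARTIAL:eq:primal} contains higher-rank matrices. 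Your argument closes exactly this gap: Slater's point $\Imtx$ gives strong duality, so every primal optimum $\tilde Y$ must satisfy $\tr{L(\yhat)\tilde Y} = 0$; the standard PSD fact then forces the range of $\tilde Y$ into $\ker L(\yhat)$, which is one-dimensional under the hypothesis $\lsec(L(\yhat)) > 0$, and $\diag{\tilde Y} = \onevct$ pins down $\tilde Y = \yhat\yhat^\top$. This buys full uniqueness of the SDP optimizer, hence recovery of $\yhat$ up to global sign, which is what the downstream theorem actually needs; the paper's version certifies less, so your route is not only valid but strictly more rigorous on the crux of the lemma.
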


\begin{proof}
To analyze the conditions of partial recovery, we want to fulfill all KKT conditions using $\yhat$. 
To do so, we set $Y = \yhat \yhat^\top$, $V = \Diag{v(\yhat)}$, and $B = V - A$. It follows that $B =  L(\yhat)$.
We now proceed to check all KKT conditions. 
\begin{itemize}
    \item Stationarity: fulfilled by the construction of $B$.
    \item Primal Feasibility: fulfilled by the construction of $Y$.
    \item Dual Feasibility: $V$ fulfilled. See below for $B\succeq \zeromtx$.
    \item Complementary Slackness: fulfilled by noticing that
    \begin{align*}
        \inprod[B]{Y} 
        &= \inprod[L(\yhat)]{\yhat \yhat^\top} \\
        &= \sum_i \left(v_i(\yhat) - A_{ii} - \sum_{j\neq i} A_{ij} \yhat_i \yhat_j \right) \\
        &= \sum_i \left(\left(\sum_j A_{ij} \yhat_i \yhat_j\right) - A_{ii} - \sum_{j\neq i} A_{ij} \yhat_i \yhat_j \right) \\
        &= 0 \,.
    \end{align*}
\end{itemize}
We have two observations. 
First, from the Complementary Slackness condition, we know $\yhat$ is always an eigenvector of $L(\yhat)$, with the corresponding eigenvalue being $0$. 
Second, from Dual Feasibility, the sufficient and necessary condition for optimality is $L(\yhat) \succeq \zeromtx$. 
Combine the two above, the sufficient and necessary condition for optimality can be written as 
\begin{equation}
\lsec(L(\yhat)) \geq 0 \,.    
\label{PARTIAL:eq:optimality_geq}
\end{equation}

However, on top of the KKT conditions, we also want to ensure the uniqueness of the solution on the recoverable part $S$. We do not require uniqueness for $\Scomp$ part (in that case the problem can be reduced to exact recovery). Enforcing uniqueness of $y_S^\ast$ is equivalent to requiring: for every vector $\tilde{y}$, if $\tilde{y}_S$ is not a multiple of $y_S^\ast$, it must fulfill $\inprod[L(\yhat)]{\tilde{y} \tilde{y}^\top} > 0$. Combining with the optimality condition \eqref{PARTIAL:eq:optimality_geq}, we obtain the following sufficient condition for partial recovery
\begin{equation*}
\lsec(L(\yhat)) > 0 \,.   
\end{equation*}
\end{proof}

We now provide our main theorem about partial inference.

\begin{theorem}
Partial recovery of at least $k$ labels can be achieved through solving the semidefinite program \eqref{PARTIAL:eq:primal}, with probability at least $\sum_{i=k}^n \binom{n}{k} \left(1 - \epsilon(\phi_\Gcal,\Delta_{\max},p,k)\right)$, where
\begin{align*} 
\epsilon(\phi_\Gcal,\Delta_{\max},p,k)
&=
n \exp\left(- \frac{(1-2p)^2 \phi_\Gcal^4}{512 p(1-p)\Delta_{\max}^3 + 11(1-2p)(1-p)\Delta_{\max}\phi_\Gcal^2}\right) \\
& + k \exp\left(- \frac{2(1-2p)^2}{\Delta_{\max}} \left(\frac{\phi_\Gcal^2}{16\Delta_{\max}} - (n-k)\right)^2   \right) \nonumber \\
& +  (n-k) \exp\left(- \frac{2(1-2p)^2}{\Delta_{\max}} \left(\frac{\phi_\Gcal^2}{16\Delta_{\max}} - (2k+\Delta_{\max}-n)\right)^2   \right)
\,.    
\end{align*}
\label{PARTIAL:thm_rate}
\end{theorem}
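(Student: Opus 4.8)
The plan is to reduce the theorem to the sufficient condition $\lsec(L(\yhat)) > 0$ supplied by the preceding lemma, and then to show that this spectral gap is positive with the claimed probability. The first move is to strip away the signs of $\yhat$ by conjugating with the diagonal sign matrix $D := \Diag{\yhat}$. Since $\yhat \in \{+1,-1\}^n$ we have $D = D^\top = D^{-1}$, so $\tilde{L} := D L(\yhat) D$ is similar to $L(\yhat)$ and shares its spectrum; moreover $D\yhat = \onevct$, so the guaranteed $0$-eigenvector becomes the all-ones vector. A direct computation gives $\tilde L = \Diag{\tilde v} - \tilde A$ with $\tilde A_{ij} = A_{ij}\yhat_i \yhat_j$ and $\tilde v_i = \sum_j \tilde A_{ij}$, i.e. $\tilde L$ is itself the signed graph Laplacian of the reweighted observation $\tilde A$. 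It therefore suffices to prove $\lperp(\tilde L) > 0$, that is, that $\tilde L$ is positive definite on $\onevct^\perp$, with high probability.

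Next I would split $\tilde L = \Expect{\tilde L} + E$ into its mean and a centered fluctuation $E := \tilde L - \Expect{\tilde L}$, and use $\lperp(\tilde L) \geq \lperp(\Expect{\tilde L}) - \norm{E}$. For an edge whose two endpoints carry matching labels one has $\tilde A_{ij} = z_{ij}$ with $\Expect{\tilde A_{ij}} = 1-2p$, whereas the crossing edges incident to mislabeled nodes of $\Scomp$ contribute a reversed mean; consequently $\Expect{\tilde L}$ equals, up to the correction caused by these crossing edges, the scaled combinatorial Laplacian $(1-2p) L_\Gcal$ of the underlying graph. Cheeger's inequality then lower-bounds its gap on $\onevct^\perp$ by a quantity of order $(1-2p)\phi_\Gcal^2/\Delta_{\max}$, which is exactly the term $\phi_\Gcal^2/(16\Delta_{\max})$ that appears (after the $(1-2p)$ factor is absorbed) inside the second and third exponentials. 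The fluctuation $E$ is a sum of independent centered edge contributions, each of operator norm at most of order $(1-p)$ and with matrix variance proxy of order $p(1-p)\Delta_{\max}$; applying the matrix Bernstein inequality with deviation $t$ set to half the above gap produces precisely the first term $n\exp(-(1-2p)^2\phi_\Gcal^4 / (512\, p(1-p)\Delta_{\max}^3 + 11(1-2p)(1-p)\Delta_{\max}\phi_\Gcal^2))$.

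To control $\lperp(\Expect{\tilde L})$ together with the signed degrees $\tilde v_i$ that drive diagonal dominance, I would bound each node's signed degree separately. For a node in $S$, the only neighbors whose edge can flip sign are those in $\Scomp$, at most $n-k$ of them, so $\tilde v_i$ is a sum in which at most $n-k$ Bernoulli terms have their success probability reversed; this is exactly the mixed population $\ber{r}$/$\ber{1-r}$ of Lemma~\ref{PARTIAL:lemma:chernoff}, whose bound yields the factor $k\exp(-\tfrac{2(1-2p)^2}{\Delta_{\max}}(\tfrac{\phi_\Gcal^2}{16\Delta_{\max}} - (n-k))^2)$ after a union bound over the $k$ nodes of $S$. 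The analogous count for a node in $\Scomp$ involves its neighbors in $S$, and a short degree count caps the number of reversed terms by $2k+\Delta_{\max}-n$, giving the third factor $(n-k)\exp(-\tfrac{2(1-2p)^2}{\Delta_{\max}}(\tfrac{\phi_\Gcal^2}{16\Delta_{\max}} - (2k+\Delta_{\max}-n))^2)$. A union bound over the three failure events produces $1-\epsilon(\phi_\Gcal,\Delta_{\max},p,k)$ for one fixed target set $S$ of size $k$, and summing over the choices of recoverable sets yields the stated $\sum_{i=k}^n \binom{n}{k}(1-\epsilon)$. The main obstacle is the second paragraph: converting the combinatorial Cheeger and crossing-edge accounting into a clean spectral lower bound on $\onevct^\perp$ that survives both the sign reversals of the $\Scomp$ block and the random fluctuation, and tuning the constants so that the Bernstein deviation, the Cheeger factor $16$, and the two node-wise thresholds $n-k$ and $2k+\Delta_{\max}-n$ all align. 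This is where the asymmetric treatment of $S$ and $\Scomp$, and hence the very possibility of partial rather than exact recovery, is earned.
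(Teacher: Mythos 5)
Your reduction to the lemma's condition $\lsec(L(\yhat)) > 0$, your node-wise thresholds $n-k$ and $2k+\Delta_{\max}-n$, and your final counting over configurations all match the paper. But the central spectral decomposition you propose does not work, and the point of failure is exactly what you defer to as ``the main obstacle.'' You split $\tilde L = \Expect{\tilde L} + E$ and plan to use $\lperp(\tilde L) \geq \lperp(\Expect{\tilde L}) - \norm{E}$ with Cheeger applied to $\Expect{\tilde L}$. This cannot succeed: $\Expect{\tilde L}$ is the signed Laplacian of a weighted graph in which every edge crossing between $S$ and $\Scomp$ carries weight $-(1-2p)$, and such edges force $\lperp(\Expect{\tilde L}) < 0$ whenever they exist. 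Concretely, take $x$ with $x_i = a$ for $i\in S$, $x_i = b$ for $i\in\Scomp$, and $ka + (n-k)b = 0$; then $x \perp \onevct$ and $x^\top \Expect{\tilde L}\, x = -(1-2p)\sum_{\text{crossing } (i,j)\in\Ecal}(a-b)^2 < 0$. So Cheeger's inequality applies to the combinatorial Laplacian of $\Gcal$, not to $\Expect{\tilde L}$, and your bound $\lperp(\Expect{\tilde L}) - \norm{E}$ can never be made positive. A secondary inconsistency: your fluctuation $E$ already contains the diagonal randomness of the signed degrees $\tilde v_i$, so controlling $E$ by matrix Bernstein and then separately controlling the $\tilde v_i$ by the mixed Chernoff lemma double-counts the same noise; the three error terms in $\epsilon$ cannot be assembled into one union bound from your split.

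The missing idea---and the paper's actual trick---is to center the decomposition at the expected Laplacian of the \emph{true} labeling $y^\ast$, not of $\yhat$. Since $L(\yhat) = \Diag{v(\yhat)} - A$ has off-diagonal part $-A$ that does not depend on the target labeling at all, one can write $L(\yhat) = \left(\Diag{v(\yhat)} - \Expect{\Diag{v(y^\ast)}}\right) + \left(\Expect{A} - A\right) + \Expect{\Diag{v(y^\ast)} - A}$. The third term is PSD with $\lsec \geq (1-2p)\phi_\Gcal^2/(4\Delta_{\max})$ by Theorem 2 of \citet{bello2019exact}---this is where the Cheeger-type gap legitimately enters. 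The second term is pure off-diagonal noise, handled by matrix Bernstein, giving the first error term. The first term is \emph{diagonal}, and it is there, not in any expected matrix, that the crossing-edge bias appears: $-2(1-2p)c_{i,\Scomp}$ for $i\in S$ and $-2(1-2p)c_{i,S}$ for $i\notin S$, which the mixed Chernoff lemma converts into your second and third error terms with the thresholds you identified. This asymmetric centering keeps the three sources of error disjoint (diagonal noise plus bias, off-diagonal noise, deterministic gap) and is precisely what makes partial rather than exact recovery provable; without it, your outline does not assemble into a proof.
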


\begin{proof}
On a high level, our proof involves two steps. 
In the first step, we analyze the probability of recovering a specific partially correct vector $\yhat$, such that $\yhat_S = y_S^\ast$, and $\yhat_\Scomp = -y_\Scomp^\ast$. 
In the second step, we apply combinatorics and union bounds to bound the probability of recovering at least $k$ nodes.

We first analyze the conditions under which \eqref{PARTIAL:eq:non_unique} holds for a specific $\yhat$ with $\abs{S} = k$ recoverable labels. 
As stated above, it is sufficient to require that 
$\lsec(L(\yhat)) > 0$ to guarantee partial recovery on $S$.
Note that 
\begin{align}
\lsec(L(\yhat)) 
&= \lsec(\Diag{v(\yhat)} - A) \nonumber\\ 
&= \lsec\left((\Diag{v(\yhat)} - \Expect{\Diag{v(y^\ast)}}) - (A - \Expect{A}) + (\Expect{\Diag{v(y^\ast)} - A})\right) \,.
\label{PARTIAL:eq:three_term_together}
\end{align}
To show \eqref{PARTIAL:eq:three_term_together} being strictly positive, it is sufficient to  prove that 
\begin{align}
\lmin(\Diag{v(\yhat)} - \Expect{\Diag{v(y^\ast)}})
+ \lmin(\Expect{A}- A)
+ \lsec(\Expect{\Diag{v(y^\ast)} - A})
> 0 
\,.
\label{PARTIAL:eq:three_term_splitted}
\end{align}

Regarding the last term in \eqref{PARTIAL:eq:three_term_splitted}:
following \citet[Theorem 2]{bello2019exact}, we obtain that 
\begin{align}
\lsec(\Expect{\Diag{v(y^\ast)} - A}) 
\geq 
(1-2p) \frac{\phi_\Gcal^2}{4\Delta_{\max}}
\,.
\label{PARTIAL:eq:term_expectation}
\end{align}

Regarding the middle term in \eqref{PARTIAL:eq:three_term_splitted}:
using matrix Bernstein's inequality \citep{tropp2015introduction} and set deviation to be half of \eqref{PARTIAL:eq:term_expectation}, we obtain that 
\begin{align}
&\Prob{\lmin(\Expect{A}- A) \leq -(1-2p)\frac{\phi_\Gcal^2}{8\Delta_{\max}}} \nonumber \\
&\leq 
n \exp\left(- \frac{(1-2p)^2 \phi_\Gcal^4}{512 p(1-p)\Delta_{\max}^3 + 11(1-2p)(1-p)\Delta_{\max}\phi_\Gcal^2}\right)
\,.
\label{PARTIAL:eq:term_A_deviation}
\end{align}

Finally we analyze the first term in \eqref{PARTIAL:eq:three_term_splitted}. Since both matrices are diagonal, to find a lower bound for the minimum eigenvalue, it is sufficient to find the minimum of $v_i(\yhat) - \Expect{v_i(y^\ast)}$ among all possible $i\in [n]$ and all possible partial solution $\yhat$.
By definition of $v$, we have 
\begin{align*}
v_i(\yhat) 
&= (A\yhat \yhat^\top)_{ii} \\
&= \yhat_i \sum_{j=1}^n A_{ij} \yhat_j \\
&= \yhat_i y_i^\ast \sum_{j=1}^n z_{ij} \onefun[(i,j)\in \Ecal] y_j^\ast \yhat_j \\
&= \onepmfun[i\in S] \sum_{j=1}^n z_{ij} \onefun[(i,j)\in \Ecal] \onepmfun[j\in S]
\,.
\end{align*} 
Similarly, we have 
\begin{align*}
v_i(\yast) 
&= \sum_{j=1}^n z_{ij} \onefun[(i,j)\in \Ecal]
\,,
\end{align*} 
and 
\begin{align*}
\Expect{v_i(\yast)} 
&= (1-2p) \Delta_i
\,.
\end{align*}
We now proceed to bound 
\begin{align}
v_i(\yhat) - \Expect{v_i(\yast)}
= 
\left(\onepmfun[i\in S] \sum_{j=1}^n z_{ij} \onefun[(i,j)\in \Ecal] \onepmfun[j\in S] \right) - (1-2p) \Delta_i
\,.
\label{eq:v_our_subtraction}
\end{align}
For any fixed $i$, we use $c_{i,S}$ to denote the number of indices $j$ fulfilling $j\in S$ with $(i,j)\in \Ecal$, and we use $c_{i,\Scomp}$ to denote the number of indices $j$ fulfilling $j\notin S$ with $(i,j)\in \Ecal$. 
By counting, we have the lower and upper bounds 
$c_{i,S} \geq \max(0, \Delta_i - (n-\abs{S}))$ and 
$c_{i,\Scomp} \leq \min(\Delta_i, n - \abs{S})$, respectively.
Now we discuss two cases of $i$:
\begin{itemize}
\item If $i\in S$: from \eqref{eq:v_our_subtraction} we obtain 
\begin{align*}
v_i(\yhat) - \Expect{v_i(\yast)} 
&= \left(\sum_{j=1}^n z_{ij} \onefun[(i,j)\in \Ecal] \onepmfun[j\in S] \right) - (1-2p) \Delta_i \\
&= \sum_{(i,j)\in\Ecal} z_{ij} \onepmfun[j\in S]  - (1-2p) \Delta_i  \\
&= \sum_{(i,j)\in\Ecal} \left(z_{ij} - \Expect{z_{ij}}\right) \onepmfun[j\in S] + \sum_{(i,j)\in\Ecal} \Expect{z_{ij}} \onepmfun[j\in S] - (1-2p) \Delta_i  \\
&= \sum_{(i,j)\in\Ecal} \left(z_{ij} - \Expect{z_{ij}}\right) \onepmfun[j\in S] 
+ (1-2p) c_{i,S}
- (1-2p) c_{i,\Scomp}
- (1-2p) \Delta_i  \\
&= \sum_{(i,j)\in\Ecal} \left(z_{ij} - \Expect{z_{ij}}\right) \onepmfun[j\in S]  
-(1-2p) \left(\Delta_i - c_{i,S} + c_{i,\Scomp} \right) \\
&= \sum_{(i,j)\in\Ecal} \left(z_{ij} - \Expect{z_{ij}}\right) \onepmfun[j\in S]  
- 2(1-2p) c_{i,\Scomp} 
\,.
\end{align*} 
Note that the last term is the summation of $\Delta_i$ independent random variables. 
Applying Lemma \ref{PARTIAL:lemma:chernoff}, we obtain that 
\begin{align*}
&\Prob{v_i(\yhat) - \Expect{v_i(\yast)} \geq - (1-2p) \frac{\phi_\Gcal^2}{8\Delta_{\max}}} \\
=& \Prob{\sum_{(i,j)\in\Ecal} \left(z_{ij} - \Expect{z_{ij}}\right) \onepmfun[j\in S]  
\geq 2(1-2p) c_{i,\Scomp} - (1-2p) \frac{\phi_\Gcal^2}{8\Delta_{\max}}} \\
\geq &
1 - \exp\left(-(1-2p)^2 \left(2\Delta_i - 2c_{i,S} + \frac{c_{i,S}}{2p(1-p)}\right)  \left(- c_{i,\Scomp} + \frac{\phi_\Gcal^2}{16 \Delta_{\max}}\right)^2  \frac{1}{\Delta_i^2}\right) \\
\geq &
1 - \exp\left(-2(1-2p)^2 \left(- c_{i,\Scomp} + \frac{\phi_\Gcal^2}{16 \Delta_{\max}}\right)^2  \frac{1}{\Delta_i}\right) \\
\geq &
1 - \exp\left(-2(1-2p)^2 \left(- (n-\abs{S}) + \frac{\phi_\Gcal^2}{16 \Delta_{\max}}\right)^2  \frac{1}{\Delta_i}\right) \\
\geq &
1 - \exp\left(-2(1-2p)^2 \left(- (n-\abs{S}) + \frac{\phi_\Gcal^2}{16 \Delta_{\max}}\right)^2  \frac{1}{\Delta_{\max}}\right) \\
\end{align*}

\item If $i\notin S$: we have 
\begin{align*}
v_i(\yhat) - \Expect{v_i(\yast)} 
&= -(1-2p) \left(\Delta_i + c_{i,S} - c_{i,\Scomp} \right) - \sum_{(i,j)\in\Ecal}  \onepmfun[j\in S] \left(z_{ij} - \Expect{z_{ij}}\right) \,.
\end{align*}
Using the same concentration inequality as above, we obtain    
\begin{align*}
&\Prob{v_i(\yhat) - \Expect{v_i(\yast)} \geq -(1-2p) \frac{\phi_\Gcal^2}{8\Delta_{\max}}} \\
=& \Prob{\sum_{(i,j)\in\Ecal} \left(z_{ij} - \Expect{z_{ij}}\right) \onepmfun[j\in S]  
\geq 2(1-2p) \left(\Delta_i + c_{i,S} - c_{i,\Scomp} \right) - (1-2p) \frac{\phi_\Gcal^2}{8\Delta_{\max}}} \\
\geq &
1 - \exp\left(-(1-2p)^2 \left(2\Delta_i - 2c_{i,S} + \frac{c_{i,S}}{2p(1-p)}\right)  \left(-(\Delta_i + c_{i,S} - c_{i,\Scomp}) + \frac{\phi_\Gcal^2}{16 \Delta_{\max}}\right)^2 \frac{1}{\Delta_i^2}\right) \\
\geq &
1 - \exp\left(-2(1-2p)^2  \left(-(\Delta_i + c_{i,S} - c_{i,\Scomp}) + \frac{\phi_\Gcal^2}{16 \Delta_{\max}}\right)^2 \frac{1}{\Delta_i}\right) \\
\geq &
1 - \exp\left(-2(1-2p)^2  \left(-(2\abs{S} - n +\Delta_{\max}) + \frac{\phi_\Gcal^2}{16 \Delta_{\max}}\right)^2 \frac{1}{\Delta_{\max}}\right) 
\,.
\end{align*}
\end{itemize}
Combining the two parts above, with a union bound we obtain that 
\begin{align}
&\Prob{\lmin(\Diag{v(\yhat)} - \Expect{\Diag{v(y^\ast)}}) \geq -(1-2p) \frac{\phi_\Gcal^2}{8\Delta_{\max}}} \nonumber \\
=& \Prob{\min_{i\in[n]} v_i(\yhat) - \Expect{v_i(\yast)} \geq -(1-2p) \frac{\phi_\Gcal^2}{8\Delta_{\max}}} \nonumber \\
\geq & 1 - \abs{S} \exp\left(- \frac{2(1-2p)^2}{\Delta_{\max}} \left(\frac{\phi_\Gcal^2}{16\Delta_{\max}} - (n-\abs{S})\right)^2   \right) \nonumber \\
&-  (n-\abs{S}) \exp\left(- \frac{2(1-2p)^2}{\Delta_{\max}} \left(\frac{\phi_\Gcal^2}{16\Delta_{\max}} - (2\abs{S}+\Delta_{\max}-n)\right)^2   \right)
\,.
\label{PARTIAL:eq:term_V_deviation}
\end{align}

Combining \eqref{PARTIAL:eq:term_A_deviation} and \eqref{PARTIAL:eq:term_V_deviation}, the probability of recovering a specific $\yhat$ with $\abs{S} = k$ labels is 
\begin{equation}
\Prob{\text{recovering a specfic configuration of }\hat{y}\text{ with $\abs{S} = k$}} 
\geq 
1 - \epsilon(\phi_\Gcal,\Delta_{\max},p,k) 
\,.
\label{eq:rate_singleterm}   
\end{equation}

In the second step, we proceed to bound the probability of recovering at least $k$ nodes. 
Note that there are $\binom{n}{k}$ different configurations to recover $k$ out of $n$ labels, and the events of obtaining each configuration are all disjoint.
Thus, the probability of recovering exactly $k$ labels for any configuration is 
\begin{equation}
\Prob{\text{recovering exactly $k$ labels}} 
\geq 
\binom{n}{k} \left(1 - \epsilon(\phi_\Gcal,\Delta_{\max},p,k)\right)
\,.
\label{eq:rate_k}   
\end{equation}
Additionally, the events of recovering exactly $k, k+1, \dots, n$ labels are all disjoint.
As a result, the probability of recovering at least $k$ labels for any configuration is
\begin{equation}
\Prob{\text{recovering at least $k$ labels}} 
\geq 
\sum_{i=k}^n \binom{n}{k} \left(1 - \epsilon(\phi_\Gcal,\Delta_{\max},p,k)\right)
\,.
\label{eq:rate_geqk}   
\end{equation}

\end{proof}

\begin{remark}
Our partial inference analysis and results subsume the exact inference case.
In particular, if $k = n$, our result in Theorem \ref{PARTIAL:thm_rate} matches the rate of exact inference in \citet[Theorem 2]{bello2019exact}. 
\end{remark}

\subsection{Second Stage: Identification of the Sign}
The first stage returns two potential solutions $\{\hat{y},-\hat{y}\}$ that cannot be differentiated from the observation of $A$ alone. To determine the correct sign of the solution, we use the additional information from $w$. 

\begin{theorem}
Let $\{\hat{y},-\hat{y}\}$ be the two possible solutions output by the first stage. 
The correct vector recovering at least $k$ labels can be inferred from program 
\begin{align}
\maximize_{y\in\{\yast,-\yast\}} \qquad &w^\top y 
\label{eq:second_stage}
\end{align}
with probability at least $1-\epsilon_2(n,q)$, where 
\begin{equation}
\epsilon_2(n,q) = \econst^{-(1-2q^2)n/2} \,.    
\end{equation}
\label{thm:stage2rate}
\end{theorem}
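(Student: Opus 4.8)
The plan is to collapse the sign-selection problem to a one-dimensional lower-tail estimate and then feed it into the Mixed Chernoff Bound of Lemma~\ref{PARTIAL:lemma:chernoff}. Over the two candidates in \eqref{eq:second_stage}, the objective selects $\yast$ precisely when $w^\top \yast > w^\top(-\yast)$, i.e. when $w^\top \yast > 0$; on this event the recovered vector is $\yast$, which agrees with the ground truth on all $n \ge k$ coordinates, so the second stage succeeds. Ties ($w^\top \yast = 0$, possible when $n$ is even) I absorb into the failure event, so it suffices to upper bound $\Prob{w^\top \yast \le 0}$. The first step is to rewrite this quantity using the node model: since $w_i = z_i y_i^\ast$ and $(y_i^\ast)^2 = 1$, I get $w^\top \yast = \sum_{i=1}^n z_i$, a sum of $n$ independent $\{-1,+1\}$-valued variables with $\Expect{z_i} = 1-2q$, hence $\Expect{w^\top \yast} = (1-2q)n > 0$ since $q < 1/2$.

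Next I would cast the failure event in the Bernoulli form demanded by Lemma~\ref{PARTIAL:lemma:chernoff}. Writing $x_i := (1 - z_i)/2 \sim \ber{q}$ gives $w^\top \yast = n - 2\sum_{i=1}^n x_i$, so the event $\{w^\top \yast \le 0\}$ coincides with $\{\sum_i x_i \ge n/2\}$. Because $q < 1/2$, every $x_i$ is of the minority type, so I invoke the lemma with $m = 0$ and $r = 1-q \ge 1/2$, and with deviation $t = n/2 - \Expect{\sum_i x_i} = n/2 - qn = n(1-2q)/2$. With $m = 0$ the exponent in Lemma~\ref{PARTIAL:lemma:chernoff} reduces to $-2t^2/n$, and substituting $t$ yields the claimed bound $\Prob{w^\top \yast \le 0} \le \epsilon_2(n,q)$, completing the argument.

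The concentration step is routine once it is set up this way; the part deserving the most care is the semantic bookkeeping. In the partial regime the first stage only determines $\yhat$ up to a global sign and only guarantees $\yhat_S = y_S^\ast$, so I must be explicit that maximizing $w^\top y$ picks the sign consistent with $\yast$ on the recoverable set and therefore recovers at least $k$ labels; phrasing the comparison over $\{\yast, -\yast\}$ as in \eqref{eq:second_stage} makes this clean and, crucially, makes $\Expect{w^\top \yast} = (1-2q)n$ independent of $S$ and $k$ --- which is exactly why $\epsilon_2(n,q)$ depends only on $n$ and the node noise $q$. I would also record that this probability is taken over the node noise $\{z_i\}$ alone, which is independent of the edge noise driving the first stage, so that the first- and second-stage success guarantees compose into an overall recovery guarantee.
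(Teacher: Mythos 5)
Your proposal is correct and takes essentially the same route as the paper: both reduce the second stage to bounding $\Prob{w^\top \yast \le 0}$ after observing $w^\top \yast = \sum_i z_i$, and your invocation of Lemma~\ref{PARTIAL:lemma:chernoff} with $m=0$, $r=1-q$, $t=n(1-2q)/2$ is precisely the Chernoff--Hoeffding bound that the paper cites directly (the paper's proof is a one-line appeal to Hoeffding's inequality). One remark worth recording: your computation yields the exponent $-(1-2q)^2 n/2$, and this is also what the paper's own Hoeffding step actually produces; the exponent $-(1-2q^2)n/2$ written in the theorem and in the paper's proof appears to be a typo, since $1-2q^2 > (1-2q)^2$ for $0<q<1/2$ and the literal claim would force the failure probability to vanish even as $q \to 1/2$, which is impossible --- so your derivation proves the intended (corrected) statement. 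Your additional bookkeeping (absorbing ties into the failure event, and noting the independence of node noise from the edge noise used in stage one) is more careful than the paper's and is a welcome addition.
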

\section{Example Graphs for Partial Inference}
In this section, we present some example graphs that can be recovered by our approach. Theorem \ref{PARTIAL:thm_rate} guarantees that partial inference can be achieved with high probability. In particular, our results of partial inference on the following classes of graphs subsume the results of exact inference from \citet{bello2019exact}.

\begin{corollary}[Complete Graphs]
Let $\Gcal$ be a complete graph of $n$ nodes. Then, partial inference of any number of nodes is achievable with high probability. In other words, exact inference is achievable with high probability.   
\label{cor:complete} 
\end{corollary}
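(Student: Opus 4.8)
The plan is to obtain the corollary as a direct specialization of Theorem \ref{PARTIAL:thm_rate} to the complete graph $K_n$, by computing the two graph parameters $\Delta_{\max}$ and $\phi_\Gcal$ and then showing that $\epsilon(\phi_\Gcal,\Delta_{\max},p,k)\to 0$. First I would record the parameters. Since every node of $K_n$ is adjacent to all other $n-1$ nodes, each degree equals $n-1$, so $\Delta_{\max}=n-1$. For the Cheeger constant, a cut induced by any vertex subset $T$ with $\abs{T}\le n/2$ has $\abs{T}(n-\abs{T})$ crossing edges, so the edge-expansion ratio is $n-\abs{T}$, minimized at $\abs{T}=\lfloor n/2\rfloor$; hence $\phi_\Gcal=\lceil n/2\rceil=\Theta(n)$. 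Throughout I would treat $p\in(0,1/2)$ as a fixed constant, so that $(1-2p)^2$ and $p(1-p)$ are constants independent of $n$.

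With these values the governing quantity in Theorem \ref{PARTIAL:thm_rate} becomes
\[
\frac{\phi_\Gcal^2}{16\Delta_{\max}} = \frac{\lceil n/2\rceil^2}{16(n-1)} = \Theta(n),
\]
which is the key fact driving the entire argument: the ``expansion budget'' grows linearly in $n$, while the competing offsets $(n-k)$ and $(2k+\Delta_{\max}-n)$ are $O(n)$ and carry the favorable sign when $k$ is near $n$. I would then take $k=n$ (exact recovery). The third summand of $\epsilon$ vanishes identically because its prefactor $(n-k)$ equals $0$. In the first summand the numerator $(1-2p)^2\phi_\Gcal^4=\Theta(n^4)$ and the denominator $512p(1-p)\Delta_{\max}^3+11(1-2p)(1-p)\Delta_{\max}\phi_\Gcal^2=\Theta(n^3)$, so the exponent is $-\Theta(n)$ and the term is $n\,e^{-\Theta(n)}\to 0$. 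In the second summand, with $n-k=0$, the squared factor is $(\phi_\Gcal^2/16\Delta_{\max})^2=\Theta(n^2)$, and dividing by $\Delta_{\max}=\Theta(n)$ again yields exponent $-\Theta(n)$, so this term is likewise $n\,e^{-\Theta(n)}\to 0$.

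Consequently $\epsilon(\phi_\Gcal,\Delta_{\max},p,n)\to 0$, and Theorem \ref{PARTIAL:thm_rate} with $k=n$ (where the probability bound collapses to the single term $\binom{n}{n}(1-\epsilon)=1-\epsilon$) gives recovery of all $n$ labels up to global sign with probability $1-\epsilon\to 1$. Exact inference of $\yast$ itself then follows from the sign-identification stage, whose success probability $1-\epsilon_2(n,q)$ also tends to $1$ by Theorem \ref{thm:stage2rate}. Finally, partial inference of any $n/2\le k\le n$ nodes is immediate: once all $n$ labels are recovered, any prescribed subset of size $k$ is recovered a fortiori, which is exactly the ``in other words'' phrasing of the corollary.

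I expect the main obstacle to be pinning down the correct normalization of $\phi_\Gcal$, since the conclusion rests on $\phi_\Gcal$ scaling like $\Theta(n)$ rather than being an $O(1)$ conductance; under the edge-isoperimetric definition used in \citet{bello2019exact} this holds, and it is precisely what makes $\phi_\Gcal^2/\Delta_{\max}=\Theta(n)$ dominate. The remaining work is routine asymptotic bookkeeping of the three exponential terms, where the only care required is to keep the $p$-dependent constants separated from the $n$-dependence so that each exponent is genuinely $-\Theta(n)$.
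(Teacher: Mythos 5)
Your proposal is correct and takes essentially the same route as the paper: specialize Theorem \ref{PARTIAL:thm_rate} to the complete graph with $k=n$, show $\epsilon(\phi_\Gcal,\Delta_{\max},p,n)\to 0$, and conclude that exact (hence any partial) inference holds with high probability. The only difference is one of detail --- the paper outsources the fact $\epsilon\to 0$ to a citation of \citet{bello2019exact}, whereas you verify it explicitly via $\Delta_{\max}=n-1$, $\phi_\Gcal=\lceil n/2\rceil$, and the term-by-term asymptotics; your a fortiori step deducing partial recovery of any subset from exact recovery is also a cleaner justification of the ``any number of nodes'' claim than the paper's one-line assertion.
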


\begin{definition}[$d$-regular Expander \citep{bello2019exact}]
$\Gcal$ is a $d$-regular expander graph with constant $c$, if for every set $S\subset \Vcal$ with $\abs{S} \leq n/2$, the number of edges connecting $S$ and $\Vcal \setminus S$ is greater than or equal to $c\cdot d \cdot \abs{S}$.
\label{def:expander}
\end{definition}

\begin{corollary}[Expander Graphs]
Let $\Gcal$ be a $d$-regular expander graph with constant $c$. Then, partial inference of $k$ nodes is achievable with high probability if $d\in\Omega(\log n)$, $(c^2 d / 16 - (n-k))^2\in \Omega(d \log k)$, and $(c^2 d / 16 - (2k+d-n))^2\in \Omega(d \log (n-k))$.
\label{cor:expander}
\end{corollary}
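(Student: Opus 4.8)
The plan is to apply Theorem~\ref{PARTIAL:thm_rate} directly, substituting the structural parameters of a $d$-regular expander into the failure probability $\epsilon(\phi_\Gcal,\Delta_{\max},p,k)$ and showing that each of its three additive terms is driven to zero by exactly one of the three stated hypotheses. First I would record the two parameter identifications that specialize the general bound to this graph family. Since every node has degree exactly $d$, we have $\Delta_{\max}=d$. Since Definition~\ref{def:expander} guarantees $\abs{\partial S}\geq c\,d\,\abs{S}$ for every $S$ with $\abs{S}\leq n/2$, the Cheeger constant satisfies $\phi_\Gcal\geq cd$. Because $\phi_\Gcal$ enters only the numerators of the (negative) exponents in $\epsilon$, and in the second and third terms only through the quantity $\phi_\Gcal^2/(16\Delta_{\max})$, replacing $\phi_\Gcal$ by its lower bound $cd$ will (under a positivity condition noted below) only enlarge each term, so it suffices to establish the bound with $\phi_\Gcal=cd$ and $\Delta_{\max}=d$ throughout.

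Next I would simplify the three terms under these substitutions. In the first term the degree powers collapse: the numerator $(1-2p)^2(cd)^4$ over the denominator $d^3\left(512 p(1-p)+11(1-2p)(1-p)c^2\right)$ leaves a clean $n\exp(-C_1 d)$, where $C_1=(1-2p)^2 c^4/\left(512 p(1-p)+11(1-2p)(1-p)c^2\right)$ depends only on $p$ and $c$; this vanishes precisely when $d\in\Omega(\log n)$ (with hidden constant at least $C_1^{-1}$), which is the first hypothesis. The second term becomes $k\exp\!\big(-\frac{2(1-2p)^2}{d}(c^2 d/16-(n-k))^2\big)$, of the form $k\exp(-\Theta((c^2 d/16-(n-k))^2/d))$; the hypothesis $(c^2 d/16-(n-k))^2\in\Omega(d\log k)$ forces the exponent to dominate $\log k$ and sends this term to zero. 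Identically, the third term $(n-k)\exp\!\big(-\frac{2(1-2p)^2}{d}(c^2 d/16-(2k+d-n))^2\big)$ vanishes under the third hypothesis $(c^2 d/16-(2k+d-n))^2\in\Omega(d\log(n-k))$. Assembling these, $\epsilon(\phi_\Gcal,\Delta_{\max},p,k)\to 0$, so the per-configuration success probability $1-\epsilon$ tends to $1$ and Theorem~\ref{PARTIAL:thm_rate} yields partial recovery of at least $k$ nodes with high probability.

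The main obstacle I anticipate is not the algebra but the monotonicity bookkeeping needed to justify using $\phi_\Gcal\geq cd$ as a worst case. The exponent of the first term is strictly increasing in $\phi_\Gcal$, so that term behaves well unconditionally; but for the squared exponents of the second and third terms, replacing $\phi_\Gcal$ by $cd$ genuinely weakens the bound only when the expansion quantity $c^2 d/16$ dominates the offsets, i.e. $c^2 d/16\geq n-k$ and $c^2 d/16\geq 2k+d-n$. I would therefore make this positivity explicit, noting that it is exactly the regime in which the $\Omega(\cdot)$ conditions are meaningful (the squared quantities being large with $c^2 d/16$ on the dominant side), corresponding to the graph being sufficiently connected relative to the size of the unrecovered set $\Scomp$. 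With this positivity in hand the comparison $(\phi_\Gcal^2/(16d)-\text{offset})^2\geq(c^2 d/16-\text{offset})^2$ holds, closing the monotonicity gap and completing the reduction to the three elementary limits above.
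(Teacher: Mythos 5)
Your proposal is correct and follows essentially the same route as the paper: substitute $\Delta_{\max}=d$ and $\phi_\Gcal\geq cd$ into the failure probability $\epsilon(\phi_\Gcal,\Delta_{\max},p,k)$ of Theorem~\ref{PARTIAL:thm_rate}, and observe that each of the three additive terms is sent to zero by exactly one of the three stated hypotheses. Your explicit handling of the monotonicity gap --- that the comparison $\left(\phi_\Gcal^2/(16d)-(n-k)\right)^2\geq\left(c^2d/16-(n-k)\right)^2$ (and likewise for the third term) is only valid when $c^2d/16$ dominates the respective offsets --- is a point the paper's own one-line proof silently skips, and making that positivity condition explicit is a genuine improvement in rigor over the published argument.
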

\section{Simulation Results}
We test the proposed approach on synthetic graphs, and check how many labels can be recovered experimentally. 
We fix the number of nodes to be $n = 500$, and test with different values of $p$.
For each setting we run $1000$ iterations. 
See Figure \ref{fig:simulation} for the results.
Matching our theoretical findings, our results suggest that if the noise level $p$ is smaller, or if the target number of labels to be recovered is smaller, the probability of achieving partial recovery is greater.

\begin{figure}[ht!]
\centering
\includegraphics[width=0.6\linewidth]{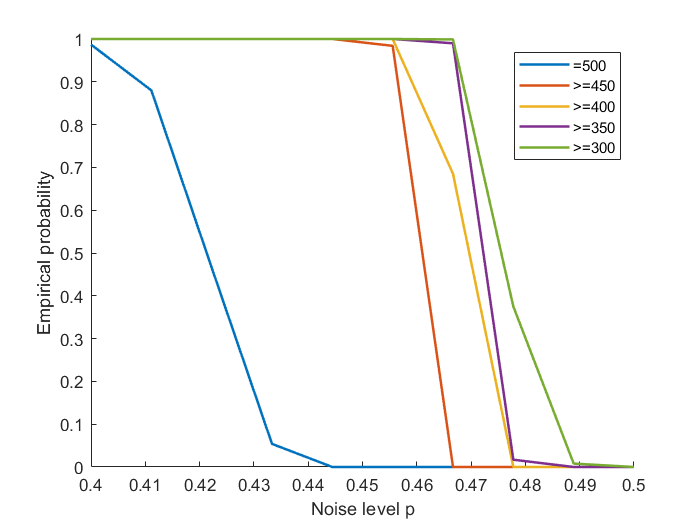}
\caption{Simulations with different noise levels $p$. Our results suggest that if the noise level $p$ is smaller, or if the target number of labels to be recovered is smaller, the probability of achieving partial recovery is greater.}
\label{fig:simulation}
\end{figure}

\bibliography{0_main.bib}
\bibliographystyle{plainnat}

\newpage
\appendix
\section{Proofs}

\begin{proof}[Proof of Lemma \ref{PARTIAL:lemma:chernoff}]
For any $t \geq 0$ and $\lambda > 0$, note that 
\begin{align*}
\Prob{s \geq \Expect{s} + nt}
&= \Prob{\econst^{\lambda s} \geq \econst^{\lambda(\Expect{s} + nt)}} \\
&\leq \econst^{-\lambda(\Expect{s} + nt)} \Expect{\econst^{\lambda s}} 
\,,
\end{align*}
using the Markov inequality.
For Bernoulli random variables, note that we have the moment generating function 
\[
\Expect{\econst^{\lambda s}} 
= 
(r\econst^\lambda + 1 - r)^m ((1-r)\econst^\lambda + r)^{n-m}    
\,.
\]
Also note that
\[
\econst^{\lambda(\Expect{s} + nt)}
= 
(\econst^{\lambda (r+t)})^m (\econst^{\lambda (1-r+t)})^{n-m}    
\,.
\]
This leads to 
\begin{align}
\Prob{s \geq \Expect{s} + nt}
&\leq \left(\frac{r\econst^\lambda + 1 - r}{\econst^{\lambda (r+t)}}\right)^m \left(\frac{(1-r)\econst^\lambda + r}{\econst^{\lambda (1-r+t)}}\right)^{n-m}
\,.
\label{PARTIAL:eq:lemma_twoparts}
\end{align}
Here we minimize the two parts independently with respect to $\lambda$. 
For the first part in \eqref{PARTIAL:eq:lemma_twoparts}, the derivative is 
\begin{align*}
\frac{\partial}{\partial \lambda} \left(\frac{r\econst^\lambda + 1 - r}{\econst^{\lambda (r+t)}}\right)
&= \econst^{-\lambda(r+t)} \left(-t + (\econst^\lambda - 1) r (1-r-t)\right) 
\,.
\end{align*}
Setting the derivative to $0$ leads to the minimizer $\lambda^\ast = \log \frac{(1-r)(r+t)}{r(1-r-t)}$, which gives us the bound
\[
\left(\frac{r\econst^\lambda + 1 - r}{\econst^{\lambda (r+t)}}\right)
\leq 
\left(\frac{r}{r+t}\right)^{r+t} \left(\frac{1-r}{1-r-t}\right)^{1-r-t} 
\,.
\]
Similarly, for the second part in \eqref{PARTIAL:eq:lemma_twoparts}, we obtain
\[
\left(\frac{(1-r)\econst^\lambda + r}{\econst^{\lambda (1-r+t)}}\right)
\leq 
\left(\frac{1-r}{1-r+t}\right)^{1-r+t} \left(\frac{r}{r-t}\right)^{r-t} 
\,.
\]
Back to the concentration inequality, we now have
\begin{align*}
\Prob{s \geq \Expect{s} + nt}
&\leq \left(\frac{r\econst^\lambda + 1 - r}{\econst^{\lambda (r+t)}}\right)^m \left(\frac{(1-r)\econst^\lambda + r}{\econst^{\lambda (1-r+t)}}\right)^{n-m} \\
&\leq 
\left(\left(\frac{r}{r+t}\right)^{r+t} \left(\frac{1-r}{1-r-t}\right)^{1-r-t}\right)^m
\left(\left(\frac{1-r}{1-r+t}\right)^{1-r+t} \left(\frac{r}{r-t}\right)^{r-t} \right)^{n-m} \\
&= \exp\left(-m \kl[r+t]{r} - (n-m) \kl[r-t]{r}\right) \\
&\leq \exp\left(-\frac{mt^2}{2r(1-r)} - 2(n-m)t^2\right)
\,.
\end{align*}
The last inequality follows from the fact that 
$\kl[r+t]{r} \geq \frac{t^2}{2r(1-r)}$, and $\kl[r-t]{r} \geq 2t^2$ if $r\geq \frac{1}{2}$ and $t \geq 0$.
Reparametrization of $nt$ as $t$ leads to the result. 
\end{proof}

\begin{proof}[Proof of Theorem \ref{thm:stage2rate}]
This follows directly from Hoeffding's inequality
\begin{align*}
\Prob{w^\top y^\ast \leq -w^\top y^\ast}
&= \Prob{w^\top y^\ast \leq 0} \\
&\leq \econst^{-(1-2q^2)n/2} 
\,.
\end{align*}
\end{proof}

\begin{proof}[Proof of Corollary \ref{cor:complete}]
By \citet{bello2019exact}, we have $\epsilon(\phi_{\Gcal}, \Delta_{\max}, p, n) \to 0$ as $n\to \infty$.
As a result, for any $k$ we have 
$\sum_{i=k}^n \binom{n}{k} \left(1 - \epsilon(\phi_\Gcal,\Delta_{\max},p,k)\right) \to 1$.
\end{proof}

\begin{proof}[Proof of Corollary \ref{cor:expander}]
Using Definition \ref{def:expander}, for a $d$-regular expander graph with constant $c$, we have $\phi_{\Gcal} \geq cd$, and $\phi_{\Gcal}^2 / \Delta_{\max} \in \Omega(d)$. 

If $d\in \Omega(\log n)$, we have $n \exp\left(- \frac{(1-2p)^2 \phi_\Gcal^4}{512 p(1-p)\Delta_{\max}^3 + 11(1-2p)(1-p)\Delta_{\max}\phi_\Gcal^2}\right) \to 0$ as $n \to \infty$.
Similarly, if $(c^2 d / 16 - (n-k))^2\in \Omega(d \log k)$, we have $k \exp\left(- \frac{2(1-2p)^2}{\Delta_{\max}} \left(\frac{\phi_\Gcal^2}{16\Delta_{\max}} - (n-k)\right)^2   \right) \to 0$.
And if $(c^2 d / 16 - (2k+d-n))^2\in \Omega(d \log (n-k)) \to 0$, we have $(n-k) \exp\left(- \frac{2(1-2p)^2}{\Delta_{\max}} \left(\frac{\phi_\Gcal^2}{16\Delta_{\max}} - (2k+\Delta_{\max}-n)\right)^2   \right) \to 0$.

Combining the three terms leads to the summation of errors tending to $0$ as $n\to \infty$. 
\end{proof}

\end{document}